\newtheorem{theorem}{Theorem}
\definecolor{dkgreen}{rgb}{0,0.6,0}
\definecolor{gray}{rgb}{0.5,0.5,0.5}
\definecolor{mauve}{rgb}{0.58,0,0.82}
\newlength{\tempheight}
\newlength{\tempwidth}
\newcommand{\rowname}[1]
{\rotatebox{90}{\makebox[\tempheight][c]{\textbf{#1}}}}
\newcommand{\columnname}[1]
{\makebox[\tempwidth][c]{\textbf{#1}}}
\tiny\color{gray},
\title{\LARGE \bf Multi-Agent Team Access Monitoring: Environments that Benefit from Target Information Sharing
}
\begin{document}

\author{%
Andrew Dudash$^{1}$, Scott James$^{2}$, and Ryan Rubel$^{3}$
\thanks{$^{1}$Noblis Autonomous Systems Research Center
                    Reston, Virginia
        {\tt\small Andrew.Dudash@noblis.org}}%
\thanks{$^{2}$Noblis Simulation and Visualization Research Center
                    Reston, Virginia
                    {\tt\small Scott.James@noblis.org}}%
\thanks{$^{3}$Noblis Autonomous Systems Research Center
                    Reston, Virginia
        {\tt\small Ryan.Rubel@noblis.org}}%
}

\maketitle

\begin{abstract}
  Robotic access monitoring of multiple target areas has applications including checkpoint enforcement, surveillance and containment of fire and flood hazards. Monitoring access for a single target region has been successfully modeled as a minimum-cut problem. We generalize this model to support multiple target areas using two approaches: iterating on individual targets and examining the collections of targets holistically. Through simulation we measure the performance of each approach on different scenarios.
\end{abstract}

\maketitle

\section{INTRODUCTION}

Consider the obstacle-filled environment in Figure \ref{fig:patho1} and Figure \ref{fig:patho2}. Three non-contiguous target regions are surrounded so that nothing can enter from the edge of the environment and reach them without being detected by a robot. Attempting to minimize the amount of robots required for each target region individually requires three robots (Figure \ref{fig:patho1}), one for each target. Conversely, by treating the target regions holistically, the amount of agents required to survey an area can be reduced to two (Figure \ref{fig:patho2}), the size of the common opening to the targets. In some scenarios, the individual and holistic approach result in identical solutions. (Figure \ref{fig:nochange1}). In this paper, we will examine the characteristics of scenarios that benefit from a holistic approach versus an individualized approach and vice versa.

We will address the following question: given an obstacle-filled environment, how many robots $n$, with limited sensor range, must be placed to protect $m$ target regions from agents entering from the edges of the environment? 
\begin{figure}
  \centering
  \subfloat[The individual approach blocks access to target regions.]{
    \label{fig:patho1}
    \includegraphics[width=35mm]{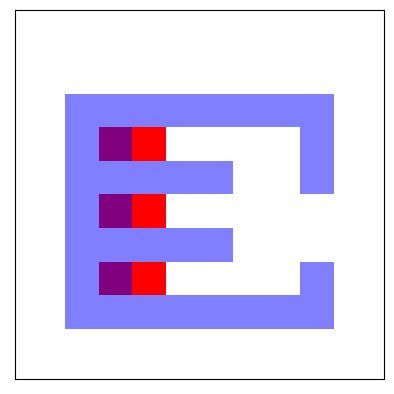}
  }\hfill
  \subfloat[The holistic approach exploits the environment, requiring one less robot than the individual approach.]{
    \label{fig:patho2}
    \includegraphics[width=35mm]{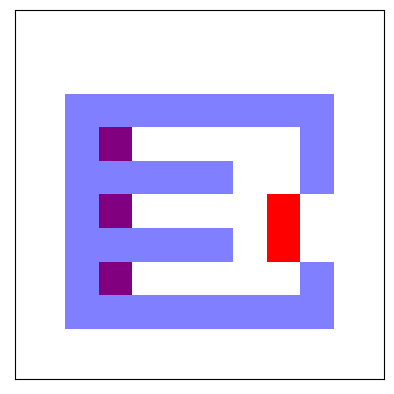}
  }

  \subfloat[The individual and holistic algorithms produce the same solution.]{
    \label{fig:nochange1}
    \includegraphics[width=35mm]{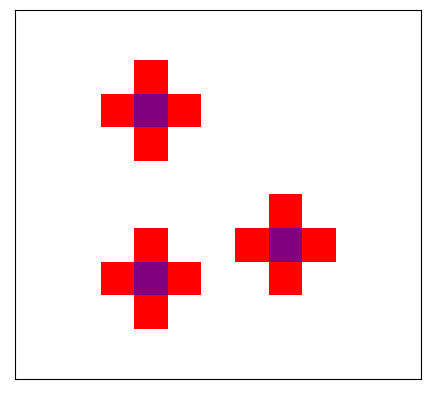}
  }\hfill
  \subfloat{
    \label{fig:legend1}
    \includegraphics[width=35mm]{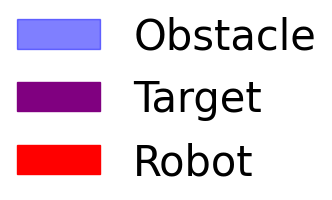}
  }

  \caption{Sharing target region information can improve solution quality for some environments.}
  \label{fig:patho}
\end{figure}

The access monitoring problem has applications in multiple areas. The counter-insurgent checkpoints described by Galula are used to temporarily isolate insecure regions until they can be combined with existing safe zones \cite{Galula}. If this were automated, an efficient solution to our access monitoring problem could ease dynamic checkpoint changes. Otte built a robotic swarm that assumes a formation when exposed to environmental conditions \cite{Otte2018AnEG} and suggests its use for handling fires, floods, tornadoes, and earthquakes; one of the response formations could be access monitoring. For surveillance, the Leschi Town combined arms collective training facility \cite{LeschiTown} was used to test the ability of drone swarms to isolate areas containing items of interest. Once the individual areas are secured, the drones could transition to monitoring all areas collectively. If collective monitoring requires less robots, as our paper explores, the remaining robots could be freed for other operations.


Gupta's solution has already shown that a method that exploits environment obstacles can reduce the number of required robots. 

In this paper, we will:

\begin{itemize}
\item Extend an existing access-monitoring technique to monitor access to multiple non-contiguous target regions (Section \ref{section:algorithm}).
\item Prove the new holistic approach provides valid solutions (Section \ref{section:algorithm-analysis}).
\item Compare the holistic approach to the individual approach in simulations for different scenarios. (Section \ref{section:experiment}).
\end{itemize}

\section{RELATED WORK} \label{section:related}

Prior research suggests an efficient access-monitoring algorithm must exploit environmental obstacles.  This is similar to how humans monitor access to a room: they only watch entryways. A solution to the access monitoring problem, presented by Gupta et al. in 2019 \cite{Gupta_Lin_Manocha_Xu_Otte_2019}, is possible by modeling the problem as a minimum-cut problem \cite{mincut21960}. This method is effective but was only implemented for single contiguous target areas. We are interested in applications that include multiple, non-contiguous target regions, for instance, multiple fires to be contained or multiple buildings to be surveilled.

Prior work in robotic surveillance solves variations of the art gallery problem: how to minimize the amount of robots required to watch a region. Katsilieris et al. develop a solution where robots with infinite sensor range secure and search an obstacle-filled environment \cite{Katsilieris_Lindhe_Dimarogonas}. Similar to our work, Kazazakis and Argyos use robots with limited range sensors, but instead of pursuing, securing, or monitoring an area, the robots are tasked to sweep an obstacle-filled environment \cite{Kazazakis_Argyros_2002}. Isler et al. recognize a class of environments where a single pursuer with a random search strategy will always locate an invader, even if the invader is arbitrarily fast and knows the position of the pursuer \cite{Isler_Kannan_Khanna_2005}. Kolling and Carpin model multi-robot surveillance as a graph problem, introducing the terms \textit{contaminate} and \textit{clear} to describe the multi-robot securing process. Our contribution is closely based on an existing solution to the \textit{isolation region surveillance problem} and the \textit{minimum robot isolation region surveillance problem} introduced by Gupta et al. \cite{Gupta_Lin_Manocha_Xu_Otte_2019}. All related work described includes simulated experiments that demonstrate the viability of one or more approaches, measure the performance of one or more approaches, or both.

Prior research in distributed robotics uses a variety of network models. In some models, communication is synchronous; latency is bounded. In others, communication is asynchronous: latency is unbounded \cite{intro_dist_prog}. Optimizers are easier to write for synchronous networks, because all information stored in the network is guaranteed to be available, but these networks are too fragile to handle network partitions \cite{10.1145/564585.564601}, latency \cite{James_Raheb_Hudak_2020}, or the failure of individual robots. Marcolino and Chaimowicz built a successful swarm avoidance algorithm, but assume peer-to-peer messages with bounded latency---no explicit time-out logic is included---and they limit their physical experiments to a laboratory environment \cite{Marcolino_Chaimowicz_2009}. Otte's distributed neural network converges, but the proof explicitly requires bounded delays \cite{Otte2018AnEG}. In contrast, Jones' foraging swarms are explicitly asynchronous and are designed to continue working, at reduced performance, when no network information is available \cite{Jones_Studley_Hauert_Winfield_2018}. A system capable of opportunistic cooperation, exploiting available information without relying on bounded latency, like Jones' foragers, is ideal. This opportunistic cooperation is difficult enough that we do not attempt to incorporate it within this paper. However, similar to how Bajcsy et al. \cite{Bajcsy_Herbert_Fridovich-Keil_Fisac_Deglurkar_Dragan_Tomlin_2018} suggest adapting their synchronous centralized system to use the asynchronous decentralized Drona system, provided by Desai, Saha, Yang et al. \cite{Drona2017}, our access monitoring work could also be made asynchronous.

\section{METHODOLOGY} \label{section:methodology}

We compare the results of our holistic and iterative algorithms on simulated environments within a discretized grid. Each square of an environment can be free, a target region, or blocked by an obstacle. Each robot can block a single free square.

Our simulator presents our map as an occupancy grid. Similar to other work in robotic surveillance, we also represent our environment as a graph where vertices correspond to  areas and edges correspond to directly traversable paths between them \cite{Kolling_Carpin_2007, Katsilieris_Lindhe_Dimarogonas}. We simulate our approaches on random environments, similar to other studies in distributed robotics \cite{Gupta_Lin_Manocha_Xu_Otte_2019, Isler_Kannan_Khanna_2005}.

Our simulation can correspond to several physical systems. In the first system, described by Gupta \cite{Gupta_Lin_Manocha_Xu_Otte_2019}, unmanned aerial vehicles aim cameras down at a 2D space. Other interpretations could include terrestrial robots in an office environment with limited reaction time or sensor range \cite{Kazazakis_Argyros_2002}. Beyond surveillance, robots could cordon off an area and alert humans in their range, blocking access to a flood zone.

\subsection{Algorithm Description} \label{section:algorithm}


There are two methods we use to calculate where to position access monitoring robots: an individual solution and a holistic solution. The individual method finds a solution for each target area individually. The total solution is the union of all individual solutions. The holistic method finds a solution for all target areas simultaneously. The individual approach extends the existing access-monitoring solution\cite{Gupta_Lin_Manocha_Xu_Otte_2019} without information sharing.


The individual approach takes Gupta's existing algorithm and applies it to each target region individually. Sink points correspond to target regions. For each target region, a planar graph is created that corresponds to the access-monitoring problem for that single region. One of the advantages of the individual approach is that it is easily parallelizable. We account for this advantage when we compare calculation times in Section \ref{section:results}.

\begin{lstlisting}[caption={Individual Algorithm}]
positions = set()
for target in targets:
  new_positions = minimum_cut(source, target)
  positions = union(positions, new_positions)
return positions
\end{lstlisting}

The holistic approach considers all targets simultaneously. A single possibly non-planar graph is created that corresponds to the holistic access-monitoring problem. The graph may be non-planar because non-contiguous target regions are joined together. Figure \ref{fig:dual} shows the target regions are adjacent to a common target region node.

For multiple non-contiguous targets, the graph generated by the holistic approach is almost always non-planar. If, for example, there is only one target, then the graph will be planar, but the problem is a single target problem. If all targets form a contiguous region, then the graph will be planar, but the problem is now a continguous target problem. It is, however, possible to construct a multiple non-contiguous target problem where the holistic approach generates a planar graph.

\begin{lstlisting}[caption={Holistic Algorithm}]
add_node(sink)
for target in targets:
  for neighbor in neighbors(target):
    add_edge(neighbor, sink)
    remove_edge(neighbor, target)
positions = min_cut(source, sink)
return positions
\end{lstlisting}

Both approaches use the same minimum-cut algorithm, a preflow-push algorithm\cite{Hagberg_Schult_Swart_2008}, to place robots. In contrast to Gupta, we use a minimum-node cut rather than a minimum-edge cut; to partition our graphs, we remove nodes instead of edges. Once a traversability graph is defined, the preflow-push algorithm finds a minimum-cut partitioning the target regions from the border region. The cut nodes correspond to the robot positions; all paths from the border region to a target region must pass through a robot position.


The main difference between the two approaches is how they combine information. In the holistic approach, \mbox{information} is combined early; all target areas are joined at the beginning; there is one traversability graph. In the individual approach, information is combined late; robot positions are combined after they consider each individual target; there is a different traversability graph for each target.

The collection of robot positions is a set so the same grid cell is never watched by more than one robot.  Robots are not double counted for either approach.

\begin{figure}
  \setlength{\tempwidth}{35mm}
  \setlength{\tempheight}{35mm}
  \centering
    \subfloat[]{
    \label{fig:dual1}
    \includegraphics[width=40mm]{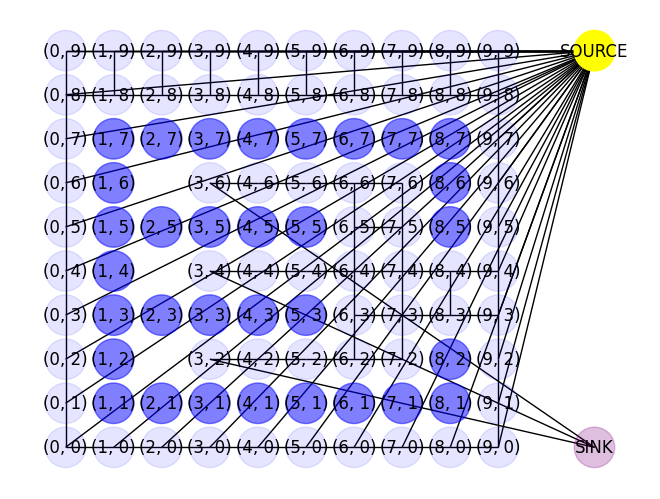}
  }\hfill
  \subfloat[]{
    \label{fig:dual2}
    \includegraphics[width=35mm]{helpful-with_sink_merge.png}
  }
  \caption{Each node in \ref{fig:dual1} corresponds to a discretized space of \ref{fig:dual2}. All border regions are contracted to a common node. This is the holistic approach and the graph is non-planar; the three target nodes are all adjacent to one common target node.}
  \label{fig:dual}
\end{figure}

\subsection{Algorithm Analysis} \label{section:algorithm-analysis}

The holistic algorithm may provide better solutions, but it only works if partitioning the combined target regions and source has the same effect as partitioning each target region from the source individually. We prove this below.

For our analysis, we reuse the nomenclature of Gupta \cite{Gupta_Lin_Manocha_Xu_Otte_2019}, summarized in Table \ref{tab:nomenclature1}, with an alteration: we have multiple areas of interest and each target region is a single node.

\begin{table}[h!]
  \centering
    \caption{The different variables in our model.}
    \label{tab:nomenclature1}
    \begin{tabular}{l|l}
      $\textbf{Symbol}$ & $\textbf{Meaning}$ \\
      \hline
      $X_{obs}$ & The regions blocked by obstacles. \\
      $X_{int}^{i}$ & Target region of interest $i$. \\
      $X_{free}$ & The free region. \\
      $\delta X$ & The border region. \\
      $B_i$ & The regions surveilled by machines. \\      
      $\delta X_{iso}$ & The combined region of all blocked and surveilled regions. \\
    \end{tabular}
\end{table}
 
\begin{theorem}
  Let there be a graph dual $G$ of the given environment.
  Let there be $n$ target regions $X_{int}^{i}$ and let $\delta X$ be the border region subgraph.
  Let there be an extra node $S$ that all $X_{int}^{i}$ are adjacent to.
  Let there be a partition separating $\delta X$ from $X_{int}^{}$.
  If the partition separates $\delta X$ from $X_{int}^{}$, it will still separate $\delta X$ from $X_{int}^{}$ with $S$ cut out.
\end{theorem}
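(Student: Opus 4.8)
The plan is to prove this as a monotonicity-of-connectivity statement. The essential observation is that the supernode $S$ can only \emph{increase} connectivity: by being adjacent to every $X_{int}^{i}$, it introduces extra routes (for instance, the two-edge path $X_{int}^{i}$--$S$--$X_{int}^{j}$ joining two otherwise separate targets) that vanish once $S$ is deleted. Consequently, any node-partition that already blocks every border-to-target route in the richer graph $G$ must continue to block them in the sparser graph obtained by cutting $S$ out. I would make this precise rather than argue about $S$ and the individual targets case by case.

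First I would fix notation: let $C$ denote the set of cut nodes realizing the partition, and recast ``the partition separates $\delta X$ from $X_{int}$'' as the statement that in the vertex-deleted graph $G - C$ there is no path from any vertex of $\delta X$ to any target $X_{int}^{i}$. Write $G' = G - \{S\}$ for the graph with $S$ cut out, that is, $S$ together with all edges incident to it removed; $G'$ is precisely the original environment dual in which the targets remain separate nodes. The goal then reduces to showing that $G' - C$ contains no $\delta X$-to-$X_{int}^{i}$ path for any $i$.

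The core step is the subgraph inclusion $G' - C \subseteq G - C$: deleting $S$ removes one vertex and some edges and adds nothing, so every vertex and edge of $G' - C$ already lies in $G - C$. I would then invoke monotonicity of reachability under subgraph inclusion---if two vertices are disconnected in a graph they stay disconnected in every subgraph, since a path in the subgraph is verbatim a path in the supergraph. Applying this to each border vertex and each $X_{int}^{i}$ transfers the separation from $G - C$ to $G' - C$, which is exactly the conclusion.

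The only point requiring care---and the main obstacle---is the status of $S$ itself with respect to $C$. The argument above presumes $S \notin C$; this holds because $S$ is the sink and a minimum node-cut may be taken disjoint from source and sink, but I would state it explicitly so that the hypothesis is read in a graph where $S$ survives and the targets stay reachable from it. As an independent sanity check I would also give the path-extension version: any hypothetical $\delta X$-to-$X_{int}^{i}$ path surviving in $G' - C$ extends, via the edge $X_{int}^{i}$--$S$, to a $\delta X$-to-$S$ path in $G - C$, which must meet $C$; since $S \notin C$, the meeting vertex already lies on the original path, contradicting its survival. Everything else is routine, so I expect the write-up to be short.
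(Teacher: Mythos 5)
Your proof is correct and takes essentially the same route as the paper's: both arguments hinge on monotonicity of disconnection---deleting $S$ (a node and its incident edges) cannot create any new $\delta X$-to-$X_{int}^{i}$ path, so the separation established in $G$ persists in $G - \{S\}$. Your write-up is simply a more explicit rendering (naming the cut set $C$, stating the subgraph inclusion $G' - C \subseteq G - C$, and flagging that $S \notin C$), which the paper's four-step proof leaves informal.
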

 
\begin{proof}
  \begin{enumerate}
    \item Suppose that the graph $G$ is cut to partition the $X_{int}$ and $\delta X$ nodes, then the graph has been split into two graphs: $A$, containing $X_{int}$, and $B$, containing $\delta X$; $A$ and $B$ are disconnected.
    \item Disconnection is monotone; it applies to every subgraph. If $A$ is disconnected from $B$, then all subgraphs of $A$ are disconnected from all subgraphs of $B$.
    \item Therefore, no $X_{int}^{i}$, a subgraph of $A$, is connected to any subgraph of $B$, including $\delta X$.
    \item Removing a node or edges cannot introduce a new path.
  \end{enumerate}
  With or without the $S$ node, there is no path from $\delta X$ to any $X_{int}^{i}$. If the partition separates $\delta X$ from $X_{int}^{}$, it will still separate $\delta X$ from $X_{int}^{}$ with $S$ cut out.
\end{proof}

\subsection{Simulator Design}


We will simulate random environments and measure which environments are preferred by the two approaches.

Rather than use continuous space, like in \cite{Gupta_Lin_Manocha_Xu_Otte_2019}, we restrict our simulation to a discrete grid. The width of a grid space is the range---sensor, alarm distance, block range---of a robot.

Our model could be extended to continuous space, but based on prior results, this seems unnecessary. Gupta tested an access monitoring approach on continuous space and used two different ways to discretize continuous space, a predefined lattice and Delaunay triangulation. Despite a gap in performance between the two methods, they followed each other closely\cite{Gupta_Lin_Manocha_Xu_Otte_2019}. For this reason, we doubt that a mapping to continuous space would impact our experiments and instead use a discrete grid.

We will experiment with two types of environments: an open environment with random obstacles and a closed environment where a grid of intersections are randomly blocked. The open environment might correspond to an outdoor area dotted with large irregular obstacles. The closed environment might correspond to an urban environment with regular obstacles, like roads in a dense city.

In the simulation, we aim to protect the target areas of interest. We assume any unobstructed border can be a source of contaminants. In both experiments, a graph of the traversable areas is generated \cite{Hagberg_Schult_Swart_2008} to represent the minimum robot isolation problem. The program builds a random environment. The program is configured by environment size, obstacle count, target region count, and random number generator seed. After environment obstacles are generated, target regions are randomly placed in unobstructed areas. For each environment, the problem is solved using both approaches: holistic and individual. The parameters of each test, including a seed value, are saved with the results.

In the open environment, the map is initially empty, then random obstacles are generated. In the closed environment, the map is initially set to a grid, then random intersections are blocked.


The output of the simulation includes the calculation time for the solution and the list of positions to place robots.

Note that the obstacles in closed environments are smaller than the obstacles generated in open environments. For this reason, our experiments with closed environments, in Section \ref{section:experiment}, use a higher obstacle count.

\section{EXPERIMENTS} \label{section:experiment}


We design five simulator experiments. In the first experiment, we compare our solutions on a single hand-picked environment. In the second experiment, we generate open environments and vary the number of random obstacles. In the third experiment, we generate closed environments and vary the number of randomly blocked intersections. In the fourth experiment, we generate open environments and vary the number of target areas. In the fifth experiment, we generate a closed environment and vary the size of the environment.

\subsection*{Experiment 1 - Pathologic}

Our environment corresponds to Figure \ref{fig:patho}, generated by our simulator.

\subsection*{Experiment 2 - Open-Environment Obstacle Sweep}

In the second experiment, the performance of the two methods on an open environment with varying obstacles is determined. We create 1000 random environments for 16 different obstacle counts: 10 to 235 by increments of 15. This range of obstacle counts goes from a near empty map, similar to an open or empty parking lot, to a near completely filled map, similar to a cave system or debris covered area. Each environment has a height and width of 100 units. For each environment, there is a random number of targets between 15-20. The targets are randomly placed. There are 16000 environments in total. Both approaches are run on every environment.

\subsection*{Experiment 3 - Closed-Environment Obstacle Sweep}

In the third experiment, the performance of the two methods on a closed environment with varying obstacles is determined. We create 1000 random environments for 31 different obstacle counts: 10 to 1510 by increments of 50. This range of obstacle counts goes from a near empty map, similar to an open city grid, to a near completely filled map, similar to a city with closed or obstructed streets. Each environment has a height and width of 100 units. For each environment, there is a random number of targets between 15-20. The targets are randomly placed. There are 31000 environments in total. Both approaches are run on every environment.

The obstacle count is higher in Experiment 3 than Experiment 2 because the obstacles in Experiment 3 are smaller than the obstacles in Experiment 2.

\subsection*{Experiment 4 - Open-Environment Target Area Count Sweep}

In the fourth experiment, we measure how the performance of the holistic method changes as the number of target areas varies. We create 100 random environment for 12 different target area counts: 1 to 551 by increments of 50. The targets are randomly placed. Each environment has a height and width of 100 units. For each environment, there are 100 obstacles---an obstacle count that the holistic method was discovered, in Experiment 3, to work well on. There are 1200 environments in total. Both approaches are run on every environment.

\subsection*{Experiment 5 - Closed-Environment Environment Size Sweep}

In the fifth experiment, we measure how the performance of the holistic method changes as the size of environments varies. We create 1000 random environments for 9 different environment widths: 50 to 370 by increments of 40. The environments are square; the width and height are the same. For each environment, there is a random number of targets between 15-20. The targets are randomly placed. The block count is 950 intersections. There are 9000 environments in total. Both approaches are run on every environment.

\section{RESULTS}
\label{section:results}


To compare the performance of different access monitoring approaches, we generate solutions with each method and record the amount of robots required and time taken to calculate the solutions.

\subsection*{Experiment 1 Results - Pathologic}

The individual solution and our solution, as expected, generate solutions that correspond to Figure \ref{fig:patho}. The holistic solution protects all target regions but requires one less robot than the individual solution.

\subsection*{Experiment 2 Results - Open-Environment Obstacle Sweep}

\begin{figure}
  \centering    
  \subfloat{\includegraphics[keepaspectratio,width=0.45\linewidth]{./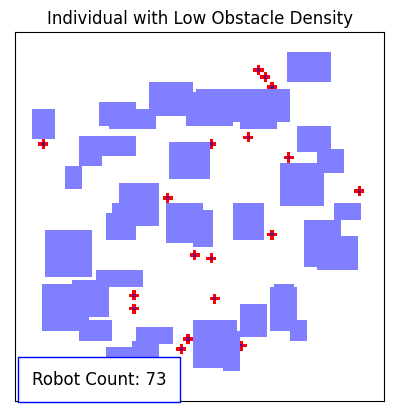}}
  \hspace{0.01em}
  \subfloat{\includegraphics[keepaspectratio,width=0.45\linewidth]{./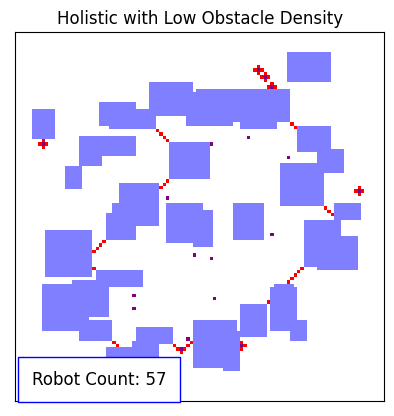}}

  \subfloat{\includegraphics[keepaspectratio,width=0.45\linewidth]{./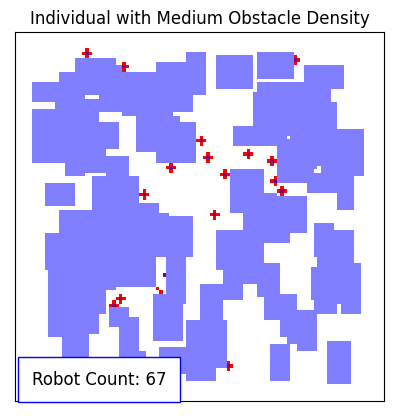}}
  \hspace{0.01em}
  \subfloat{\includegraphics[keepaspectratio,width=0.45\linewidth]{./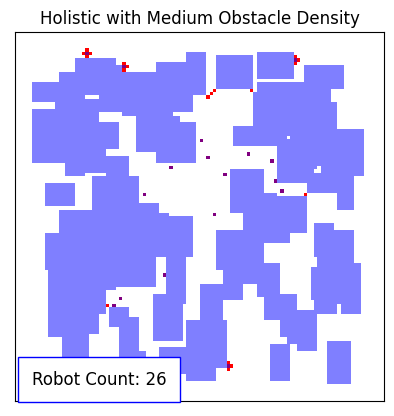}}

  \subfloat{\includegraphics[keepaspectratio,width=0.45\linewidth]{./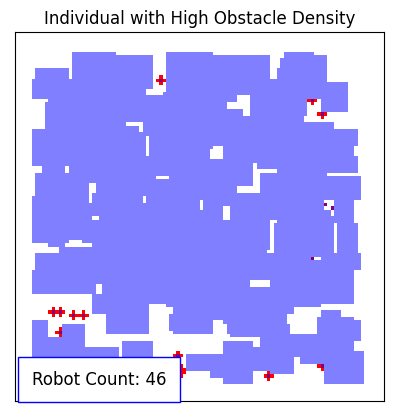}}
  \hspace{0.01em}
  \subfloat{\includegraphics[keepaspectratio,width=0.45\linewidth]{./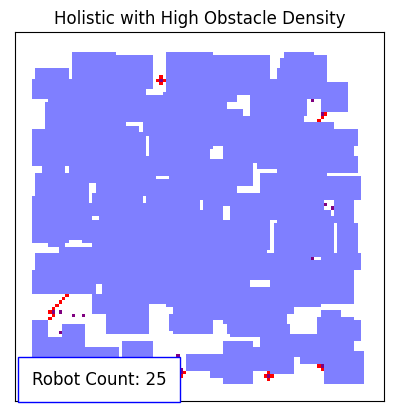}}

  \subfloat{\includegraphics[width=70mm]{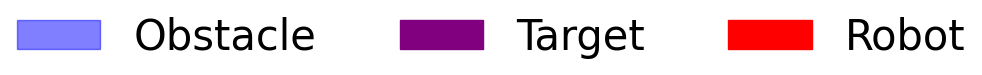}}

  \caption{Experiment 2 compared the performance of the individual and holistic approach on open environments with varied obstacle counts.}
  \label{figure:rectangle-obstacle-sweep}
\end{figure}

In Experiment 2, for each algorithm we compare the calculation time and number of robots required against the number of obstacles generated. We include sample environments of our sweep in Figure \ref{figure:rectangle-obstacle-sweep}.

\begin{figure}
  \centering
  \includegraphics[width=85mm]{./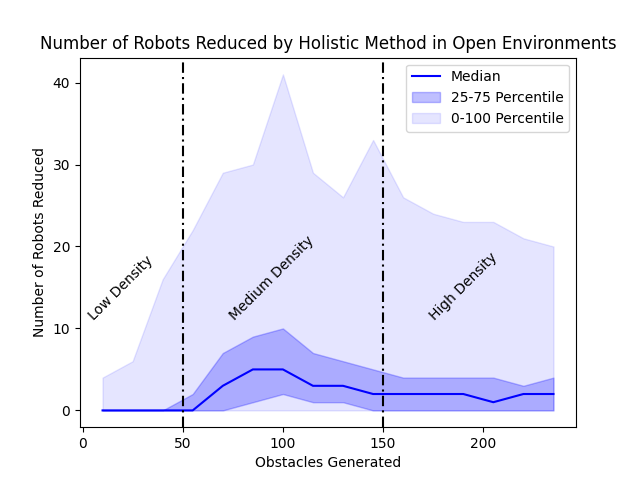}
  \caption{For all open environments tested, the holistic approach allocates fewer robots than the individual approach.  Medium density environments show the greatest improvement.}
  \label{fig:sweep-open1} 
\end{figure}

In Figure \ref{fig:sweep-open1}, we plot the median and percentiles of performance. For all trials in Experiment 2, the holistic method performs as good as or better than the individual method with regards to the number of machines required.

We measured performance as obstacle density increased. When there are few obstacles, the two methods perform similarly. Because the environments are almost completely empty, there are no environment obstacles for the holistic approach to exploit. As the number of obstacles increases, the holistic method shows significant improvement over the iterative approach. This effect weakens as the amount of obstacles increases and the performance of the two methods slowly converges. We attribute the convergence to overcrowding. As the map fills up, obstacles isolate robots from each other and the holistic method's information sharing becomes less useful. We label these three sections in Figure \ref{fig:sweep-open1}. The holistic algorithm shows greatest improvement when the environment has a medium obstacle density.

For all obstacle densities, including the holistic method's preferred medium obstacle density, there are environments where the holistic and individual solutions require the same number of machines. In Figure \ref{fig:sweep-open1}, we plot the median and percentile, rather than the mean and standard deviation, to better show the outliers; the performance of the holistic method is sensitive to the specific environment.

\begin{figure}
  \centering
  \includegraphics[width=85mm]{./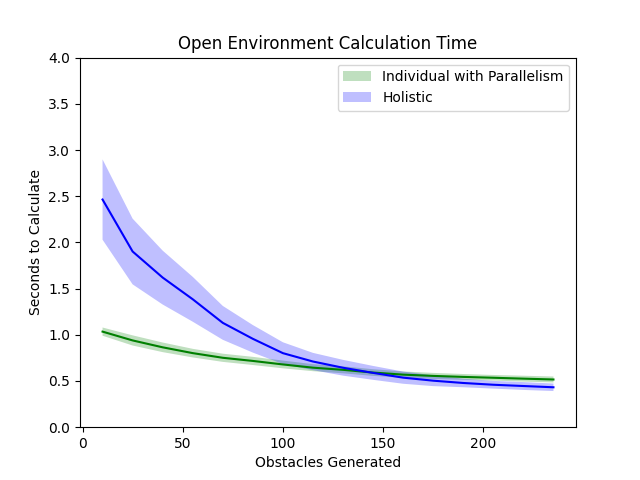}
    \caption{In open environments with high obstacle count, the holistic approach shows a computational advantage compared to the parallelized, individual approach.  Conversely, the iterative approach shows a computational advantage for low obstacle count.}
  \label{fig:sweep-open2}
\end{figure}

In open environments, the holistic method calculates an order of magnitude faster than the individual method, because the individual method must run the preflow-push algorithm once for each target region. However, this does not consider the ability of the individual method to be parallelized: all runs of the preflow-push algorithm can be run at the same time. To estimate the speed of the parallelized iterative algorithm, we divide the individual method calculation time by the number of target regions.


We compare the calculation time of the different approaches in Figure \ref{fig:sweep-open2}. The holistic method calculates slowest when the map is sparse. As the obstacle count increases, however, the holistic method eventually outperforms the parallelized individual method.

\subsection*{Experiment 3 Results - Closed-Environment Obstacle Sweep}

\begin{figure}
  \centering    
  \subfloat{\includegraphics[keepaspectratio,width=0.45\linewidth]{./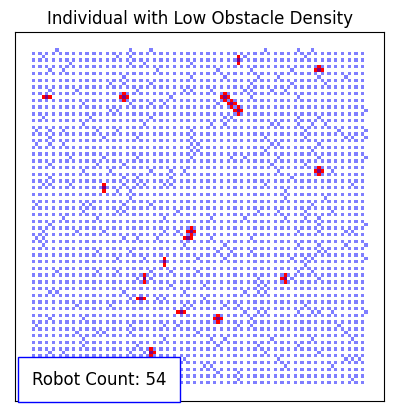}}
  \hspace{0.01em}
  \subfloat{\includegraphics[keepaspectratio,width=0.45\linewidth]{./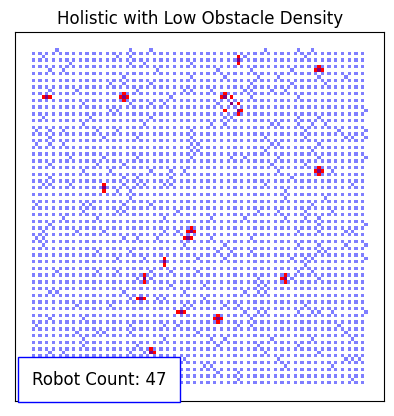}}

  \subfloat{\includegraphics[keepaspectratio,width=0.45\linewidth]{./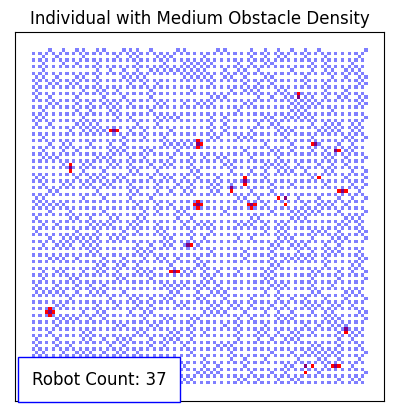}}
  \hspace{0.01em}
  \subfloat{\includegraphics[keepaspectratio,width=0.45\linewidth]{./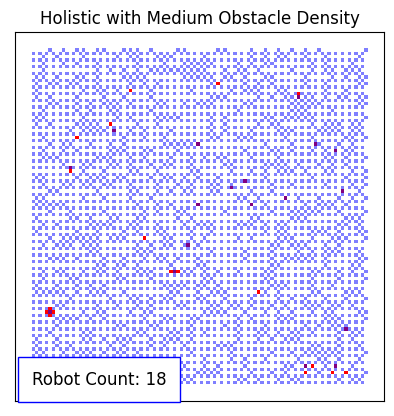}}

  \subfloat{\includegraphics[keepaspectratio,width=0.45\linewidth]{./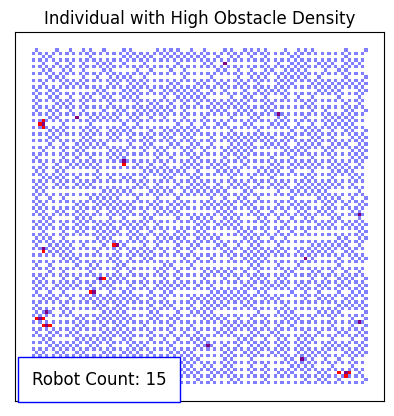}}
  \hspace{0.01em}
  \subfloat{\includegraphics[keepaspectratio,width=0.45\linewidth]{./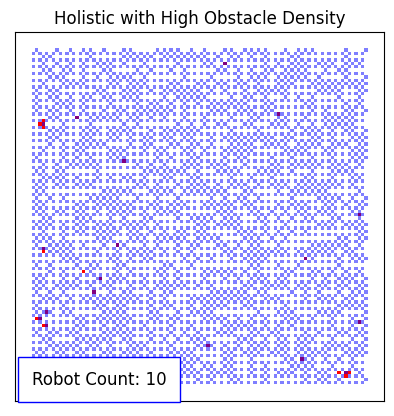}}

  \subfloat{\includegraphics[width=70mm]{legend2.png}}

  \caption{Experiment 3 compared the performance of the individual and holistic approach on closed environments with varied obstacle counts.}
  \label{figure:grid-obstacle-sweep}
\end{figure}

We contrast the performance of the two approaches on closed environments. We plot our results as two graphs. In the first graph, Figure \ref{fig:sweep-closed1}, we plot the difference in robot counts between the two approaches. In the second graph, Figure \ref{fig:sweep-closed2}, we compare the calculation time of each approach. Figure \ref{figure:grid-obstacle-sweep} shows sample environments from the experiment.

\begin{figure}
  \centering
  \includegraphics[width=85mm]{./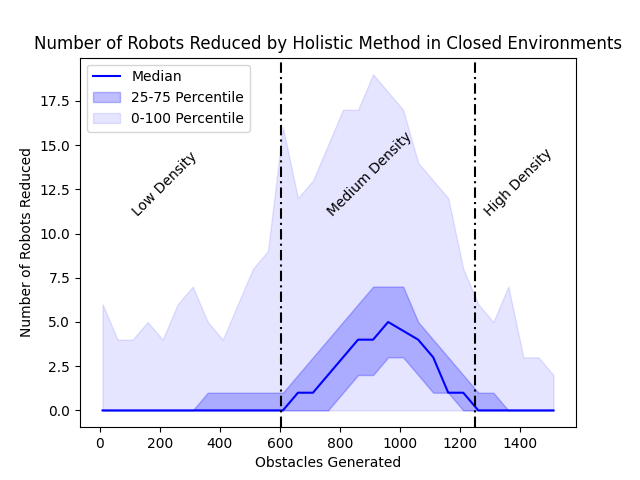}
  \caption{For all closed environments tested, the holistic approach allocates fewer robots than the individual approach.  Medium density environments show the greatest improvement.}
  \label{fig:sweep-closed1} 
\end{figure}

Similar to the results of Experiment 2, the holistic algorithm performs better when the environment is slightly crowded, but not when the environment is sparse or extremely crowded. This can be seen in Figure \ref{fig:sweep-closed1}.

The relation between machine count and obstacle density in Experiment 3 is more pronounced than in Experiment 2. We suspect this is because the obstacles in open environments, Experiment 2, can overlap, but the obstacles in closed environments, Experiment 3, never overlap. Each obstacle added to an open environment is increasingly likely to block space that an existing obstacle already blocked. In contrast, the amount of obstacles in Experiment 3 is proportional to the amount of space blocked.

The calculation time results are similar to the results of Experiment 2.  When the environment is sparse, the parallelized individual approach is faster; when the environment is dense, the holistic approach outperforms the individual approach. See Figure \ref{fig:sweep-closed2}.

\begin{figure}
  \centering
  \includegraphics[width=85mm]{./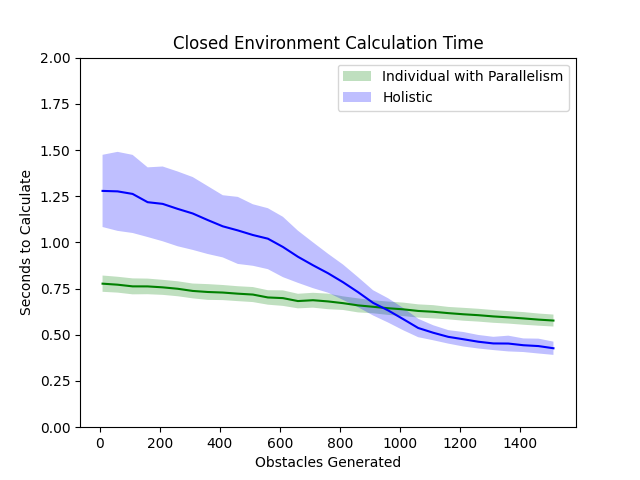}
  \caption{In closed environments with high obstacle count, the holistic approach shows a computational advantage compared to the parallelized, individual approach.  Conversely, the iterative approach shows a computational advantage for low obstacle count.}
  \label{fig:sweep-closed2}
\end{figure}

\subsection*{Experiment 4 - Open-Environment Target Area Count Sweep}

We contrast the performance of two approaches on a moderately crowded environment, 100 obstacles, as the target area count changes. The results in Figure \ref{fig:sweep-target-open1} show that the holistic method saves more machines as the number of target areas increases. As an environment fills with randomly placed machines, surrounding the entirety of the space becomes more efficient. The individual samples, shown in Figure \ref{fig:rectangle-target-sweep}, demonstrate this effect.

The figure shows a direct improvement, but we doubt that the performance can increase without bound. We anticipate a saturation point where either method, holistic or iterative, must block the border of the environment to monitor all machines. However, in our current results the holistic method performs better as the number of target areas increases.

\begin{figure}
  \centering
  \includegraphics[width=85mm]{./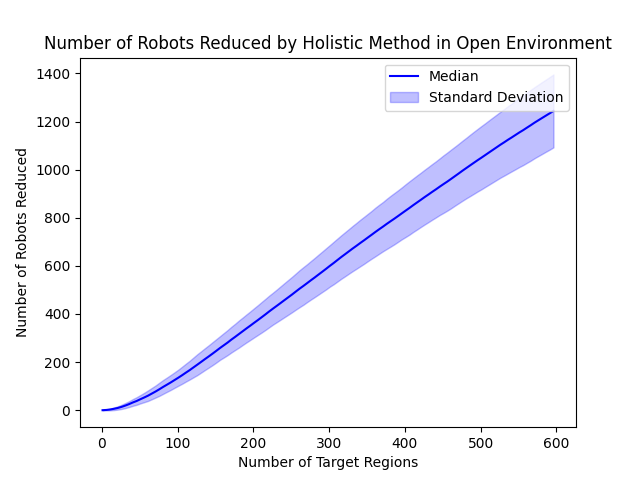}
  \caption{As the target count increases, the holistic algorithm performs better.}
  \label{fig:sweep-target-open1}
\end{figure}

\begin{figure}
  \centering
  \subfloat{\includegraphics[keepaspectratio,width=0.45\linewidth]{./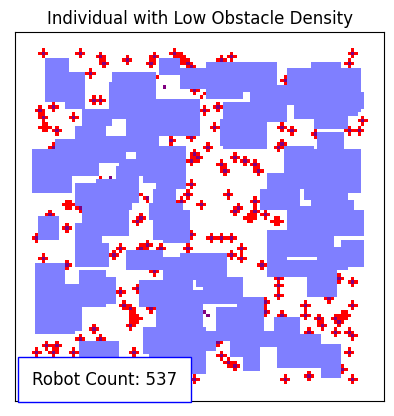}}
  \hspace{0.01em}
  \subfloat{\includegraphics[keepaspectratio,width=0.45\linewidth]{./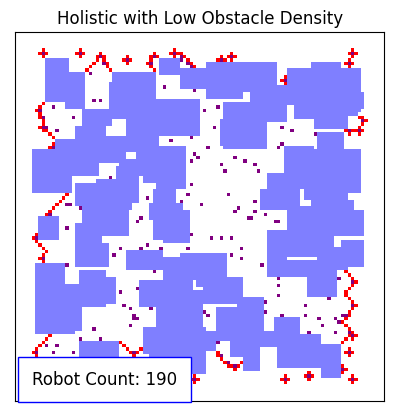}}

  \subfloat{\includegraphics[keepaspectratio,width=0.45\linewidth]{./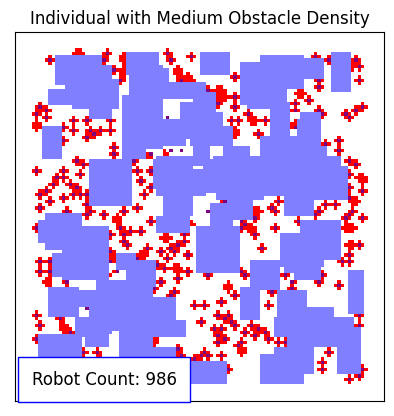}}
  \hspace{0.01em}
  \subfloat{\includegraphics[keepaspectratio,width=0.45\linewidth]{./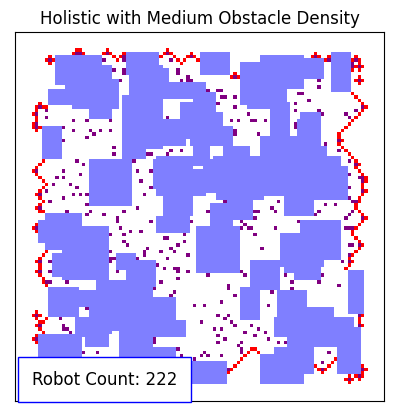}}

  \subfloat{\includegraphics[keepaspectratio,width=0.45\linewidth]{./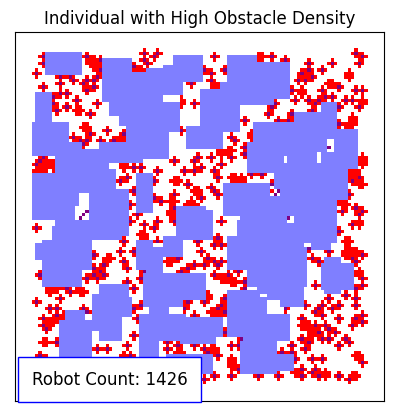}}
  \hspace{0.01em}
  \subfloat{\includegraphics[keepaspectratio,width=0.45\linewidth]{./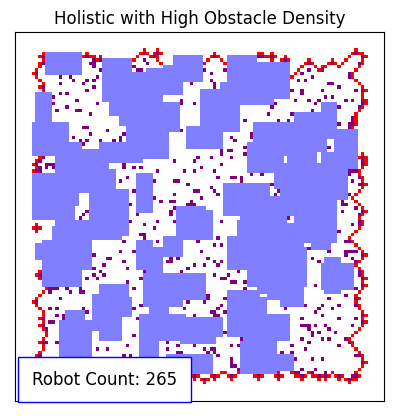}}

  \subfloat{\includegraphics[width=70mm]{legend2.png}}

  \caption{As the number of targets increases, the robot placements suggested by the holistic approach begin to outline the border of the environment.}
  \label{fig:rectangle-target-sweep}
\end{figure}

\subsection*{Experiment 5 - Closed-Environment Environment Size Sweep}

We contrast the performance of two approaches on a grid environment with varying size. The results in Figure \ref{fig:sweep-grid-size1} show that the holistic method saves more machines while the environment is small, but rapidly loses its effectiveness as the environment size increases. With a fixed obstacle and target area count but larger environment size, the environment becomes very sparse. This is similar to an environment with a low obstacle count. The holistic algorithm shows little improvement as the environment size increases.

\begin{figure}
  \centering
  \includegraphics[width=85mm]{./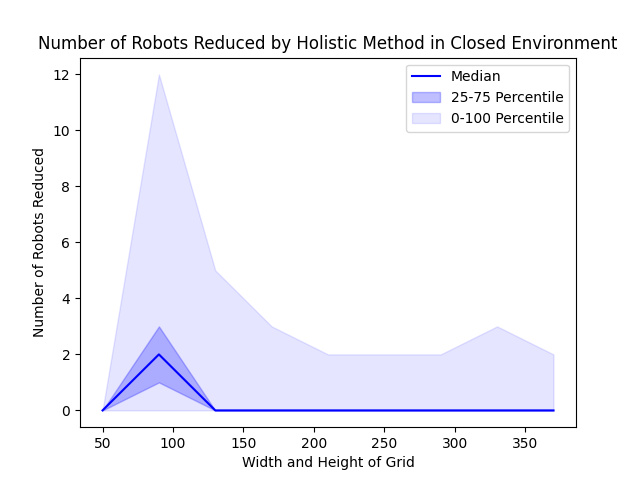}
  \caption{As the environment size increases, the holistic algorithm struggles to exploit the environment.}
  \label{fig:sweep-grid-size1}
\end{figure}

\section{CONCLUSION}

We studied if and when monitoring access to a group of target areas can be more efficient than monitoring access to each target area individually. We examined environments with irregularly located obstacles and environments where obstacles were arranged in a regular grid.

Our results revealed that in medium density environments, robots can monitor access to a group of target areas more efficiently than monitoring access to each target area individually. The holistic approach showed fewer improvements in sparse environments and in environments with a high density of obstacles. In addition, increasing the size of the environment decreased the effectiveness of the holistic approach but increasing the number of target areas increased the effectiveness. Finally, we proved that the holistic algorithm provides valid solutions.



A more thorough study of different environment types, based on the properties of the corresponding traversability graph, could reveal when environments benefit from information sharing.

\balance

\bibliography{sources}
\bibliographystyle{ieeetr}

\end{document}